\newtheorem{theorem}{Theorem}
\begin{document}

\title{Improving Network Threat Detection by Knowledge Graph, Large Language Model, and Imbalanced Learning}
\author{
Lili Zhang\thanks{Hewlett Packard Enterprise}, 
Quanyan Zhu\thanks{New York University}, 
Herman Ray\thanks{Kennesaw State University}, 
Ying Xie\thanks{Kennesaw State University}
}
\date{}
\maketitle

\abstract{%
Network threat detection is challenging due to the complex nature of attack activities and the limited availability of historical threat data to learn from. To help enhance the existing methods (e.g., analytics, machine learning, and artificial intelligence) to detect the network threats, we propose a multi-agent AI solution for agile threat detection. In this solution, a Knowledge Graph is used to analyze changes in user activity patterns and calculate the risk of unknown threats. Then, an Imbalanced Learning Model is used to prune and weigh the Knowledge Graph, and also calculate the risk of known threats. Finally, a Large Language Model (LLM) is used to retrieve and interpret the risk of user activities from the Knowledge Graph and the Imbalanced Learning Model. The preliminary results show that the solution improves the threat capture rate by 3\%-4\% and adds natural language interpretations of the risk predictions based on user activities. Furthermore, a demo application has been built to show how the proposed solution framework can be deployed and used.
}%




\noindent \textbf{Keywords:} network threat detection, knowledge graph, large language model, imbalanced learning, multi-agent AI



\section{Introduction}\label{sec:Intro}

Network threats have brought significant financial losses and public safety issues in recent years. The total reported loss from cybercrimes is more than \$12. 5 billion in the US in 2023 according to the FBI's Internet Crime Complaint Center (IC3) report ~\citep{IC3}. Moreover, public safety systems face increasing disruption in emergency communication systems and operations due to malicious attacks ~\citep{CISA}. These are caused by more complicated and new network attack activities that are not detected in time ~\citep{zhu2012guidex}. This presents a significant need for Agile Threat Detection, which aims to identify and respond to evolving threats rapidly and proactively ~\citep{zhu2024foundations}. 

The analytics, machine learning (ML) and artificial intelligence (AI) methods have been widely used by researchers and practitioners to discover the patterns of known threats and detect unusual signals of unknown threats from the activities of users. Traditional ML/AI models typically need a lot of historical data to learn from to guarantee good model performance. However, there are very limited historical data on known threats that have been observed but are not detected every time they occur. And there is no data on unknown threats that have never been observed before. These challenge traditional ML/AI models to predict network threats accurately. 

Compared to other ML / AI models only, Knowledge Graph shows a higher efficiency in analyzing user activities and their relationships to discover abnormalities. However, it has three challenges. The first is to prune and weigh the information properly in the graph to filter out weak or redundant information for network threats. The second is to include large texts as a part of graphs and graph analysis. The third is to unravel, diagnose, and interpret the complex activities and relationships of users in the graph. 

To overcome the challenges above, we propose to better detect the network threats by the combination of Knowledge Graph, Imbalanced Learning, and Large Language model (LLM) through a multi-agent AI framework. In this framework, the Knowledge Graph is used to analyze changes in the user activity pattern and calculate the risk of unknown threats, the Imbalanced Learning Model is used to prune and weigh the Knowledge Graph and also calculate the risk of known threats, and LLM is used to retrieve and interpret the user activities from the Knowledge Graph and the Imbalanced Learning Model.

Using LLMs empowered with Knowledge Graphs enhanced by Imbalanced Learning as a set of AI agents, an adaptive and real-time monitoring framework can be implemented to achieve fast and early detection of malicious behaviors. This approach integrates the strengths of LLMs in contextual reasoning with the structured relationship modeling capabilities of Knowledge Graphs to monitor, predict, and explain potential threats as they unfold. The synergy between these components ensures both depth and immediacy in threat detection, making the system highly effective in dynamic environments.

\section{Related Work}\label{sec:RelatedWork}
\subsection{Knowledge Graphs}
A \emph{knowledge graph} is a data structure that encodes entities as nodes and relations as edges, often with rich attributes on both. By capturing complex multi-relational data, knowledge graphs facilitate reasoning about connections ~\citep{Hogan2021}. In addition, knowledge graphs are extremely efficient in representing sparse big data based on their relationships and discovering abnormal patterns ~\citep{ma2021comprehensive} ~\citep{janev2020knowledge}.

Thanks to the flexibility of running different algorithms on graph data structures (e.g., similarity, centrality, community detection, path finding, shortest path, link prediction), knowledge graphs have become popular in many domains ~\citep{zhou2020survey} ~\citep{huang2019constructing} ~\citep{zhao2021anomaly} ~\citep{zhang2021measuring}, including cybersecurity, for integrating heterogeneous logs and threat intelligence ~\citep{ren2022cskg4apt} ~\citep{chen2022apt} ~\citep{sui2023logkg} ~\citep{sikos2023cybersecurity} ~\citep{rastogi2021predicting}. In a security context, a knowledge graph might include the nodes for entities (e.g., users, hosts, processes, and files) and edges for actions (e.g., logins, file accesses), enabling graph algorithms to detect suspicious patterns.


Despite all these advantages of the Knowledge Graph, it can be challenged by weak or redundant information in the graph. And it is not an efficient practice to include large text data either on nodes or edges in a graph, for example, actual article text that a user reads. In addition, it requires a lot of expert knowledge and experience to diagnose and interpret the information in the graph.

\begin{itemize}
  \item First, how to prune and weigh the nodes and edges in a large graph properly?
  \item Second, how to incorporate large texts into the graph analysis? 
  \item Third, how to retrieve and interpret information from both a graph and an imbalanced learning model efficiently?
\end{itemize}

The above challenges can be overcome by combining the knowledge graph with the imbalanced learning and the large language model.

\subsection{Imbalanced Learning}
In cybersecurity, malicious events are typically rare compared to benign events, referred to as imbalanced data, where a data set has fewer observations in the minority class (e.g., malicious event, threat) compared with majority classes (e.g., benign event, non-threat) ~\citep{chen2018machine}. \emph{Imbalanced Learning} is the process of learning patterns from imbalanced data. 

Imbalanced learning aims to eliminate the bias of traditional ML / AI models on imbalanced data in the learning process. Traditional models maximize overall accuracy, while imbalanced learning pays more attention to the accuracy of the minority class and maintains overall accuracy at a reasonable range. To adapt ML / AI models to the imbalanced data, common imbalanced learning techniques include sampling, weighting, and thresholding ~\citep{he2013imbalanced}. In extremely imbalanced data (for example, the data set with the minority class ratio less than 1\%), the weighting approach, which is also called \emph{cost-sensitive} or \emph{weighted} classification, generally works better, especially combined with thresholding approaches ~\citep{zhang2022improving}. The main reason is that the observations in the minority class are too few to do the sampling in a representative manner. In a weighting approach, the observations in the minority class are given higher weights in the loss function. For example, in logistic regression, one can use a weighted log-likelihood loss that assigns a higher weight of greater than 1 to threat observations and a weight of 1 to benign observations ~\citep{zhang2020descriptive}.


In our problem, imbalanced learning techniques are used to prune and weigh the nodes and edges in the graph, based on their relationships to the network threat. Generally speaking, a graph is pruned and weighed on the basis of the importance of the information represented on the nodes and edges in the graph. This typically depends on the problems that are being solved and the algorithms that are used. Wu used a graph hierarchy inference method based on the Agony model to eliminate noisy nodes or edges in the graph ~\citep{wu2023task}. Chong derived the graph weights based on the graph adjacency structure ~\citep{chong2020graph}. Jarnac used bootstrapping via zero-shot analogical pruning to select relevant nodes or edges in the graph ~\citep{jarnac2023relevant}. Given that our objective is to prune and weigh the nodes and edges in the graph based on their relationships to the network threat, a supervised learning approach is more suitable for our problem. And because there are less than 1\% threat observations and more than 99\% non-threat observations in the historical data, the Imbalanced Learning techniques are specifically used in our solution. 

\subsection{Large Language Models (LLMs)}
Large Language Models are neural networks trained on massive text corpora that can understand and generate natural language ~\citep{min2023recent} \citep{Brown2020,OpenAI2023}.  Existing LLMs include ChatGPT, LLaMA, Gemini, Claude, etc. ~\citep{zhao2023survey}. LLM have been used in applications of summarizing (e.g., summarizing user reviews for brevity), inferring (e.g., sentiment classification, topic extraction), transforming text (e.g., translation, spelling, and grammar correction), expanding (e.g., automatically writing emails), and Retrieval-Augmented Generation (e.g., reference on knowledge base beyond its training data before response) ~\citep{guastalla2023application}. However, the standard LLM retrieval process is based on similarity ~\citep{steck2024cosine}, where the similarity between the user's question and the documents in the database is measured and the most similar documents are selected to answer the user’s question, as shown in Figure \ref{fig: LLM QA}. 

\begin{figure}[h!]
\centering
\includegraphics{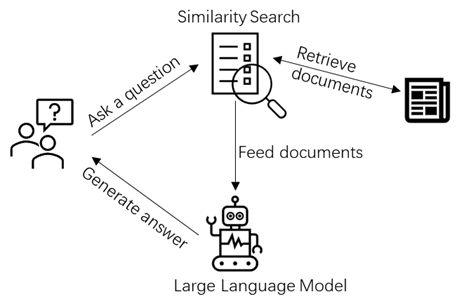}
\caption{LLM Question-Answer Process}
\label{fig: LLM QA}
\end{figure}

Recent work has explored multi-agent LLM systems where multiple models collaborate to solve tasks. These agents can pose natural-language queries to each other and to external data sources ~\citep{park2023generative} ~\citep{ni2024mechagents}. This enables dynamically intelligent interactions and collaborations among LLMs and other models and tools that are typically required to work together to solve problems ~\citep{ni2024mechagents} ~\citep{talebirad2023multi}. For example, to solve mechanical problems, a multi-agent AI platform MechAgents was developed with a comprehensive intelligent capability to retrieve and integrate relevant knowledge, theory and data, construct and execute codes, and analyze results using multiple numerical methods ~\citep{ni2024mechagents}. Another example is that multi-agent AI systems are used to enhance the decision support for smart city management, combining LLMs with existing urban information systems to process complex routing queries and generate contextually relevant responses, achieving 94-99\% accuracy ~\citep{kalyuzhnaya2025llm}.

In our framework, LLM agents serve as query-and-reasoning engines: they translate user questions into graph queries, refine queries iteratively, and interpret results in human-readable explanations.  For instance, an agent might summarize a subgraph or explain the rationale behind a flagged anomaly.

\subsection{Graph Anomaly Detection}
Detecting anomalies in graphs that evolve over time is a well-studied problem ~\citep{Akoglu2015}.  We focus on measures such as the \emph{weighted Jaccard similarity} between successive graph snapshots. Given two weighted graphs $G$ and $H$ in the same node set, we define
\[
J(G,H) \;=\; \frac{\sum_{e} \min(w_G(e),w_H(e))}{\sum_{e} \max(w_G(e),w_H(e))},
\]
which ranges in $[0,1]$.  $J=1$ if the graphs are identical and smaller values indicate a structural change.  If at time $t$ we have graph $G_t$ and at $t+1$ we add some edges or weights to get $G_{t+1}$, then $J(G_t,G_{t+1})$ quantifies how much the graph changed.  Intuitively, if few edges change, $J$ stays close to 1, but a surge of new edges (an anomaly) will drop $J$.  In Section \ref{graph_similarity_bound}, we show how the bounded updates lead to a provable bound on the change of $J$.

In summary, our work uniquely integrates a graph backbone, imbalanced learning, and cooperative LLM reasoning into one framework, despite that their individual metrics have been studied in many literature and applications separately. Knowledge-graph-based methods have been applied to security analytics and threat intelligence.  Graph-based semi-supervised learning and bootstrap methods have been used to handle noisy security data.  Imbalanced learning approaches have been adopted for rare-event detection and for log anomaly detection.  In parallel, transformer-based models have been explored in cybersecurity.  Recent work on multi-agent LLMs highlights that cooperative LLM systems can solve complex tasks through natural-language dialogue, but their use in security has been limited. 

\section{Data}\label{sec:data}

The evaluation in this paper is done on the CERT Insider Threat Test Dataset ~\citep{dataset}.  This public dataset simulates enterprise user activities (logins, file accesses, emails, etc.) for a set of users and devices, with labeled insider threats.  The CERT dataset provides detailed event logs and ground-truth threat labels, making it suitable for testing detection frameworks.  All experiments in this paper use this dataset.

\section{Proposed Methodology}

Our multi-agent framework consists of three cooperating LLM agents: two \emph{collaborators} and one \emph{supervisor}, as shown in Figure \ref{fig: AAI}.  Collaborator~1 maintains a dynamic Knowledge Graph.  At each time step $t$ it adds new events $\Delta E_t$ as edges to form $G_{t+1} = G_t \oplus \Delta E_t$, ensuring efficient online updates.  Collaborator~2 trains and applies the weighted classifier on features derived from graph entities (e.g., node degrees, subgraph patterns) to estimate threat likelihoods.  The two collaborators interact: if Collaborator~2 assigns high risk to certain events or nodes, Collaborator~1 increases their edge weights or marks them as unusual.  Conversely, edges with low risk may be pruned to focus the graph on likely threats.

\begin{figure}[h!]
\centering
\includegraphics[scale=0.37]{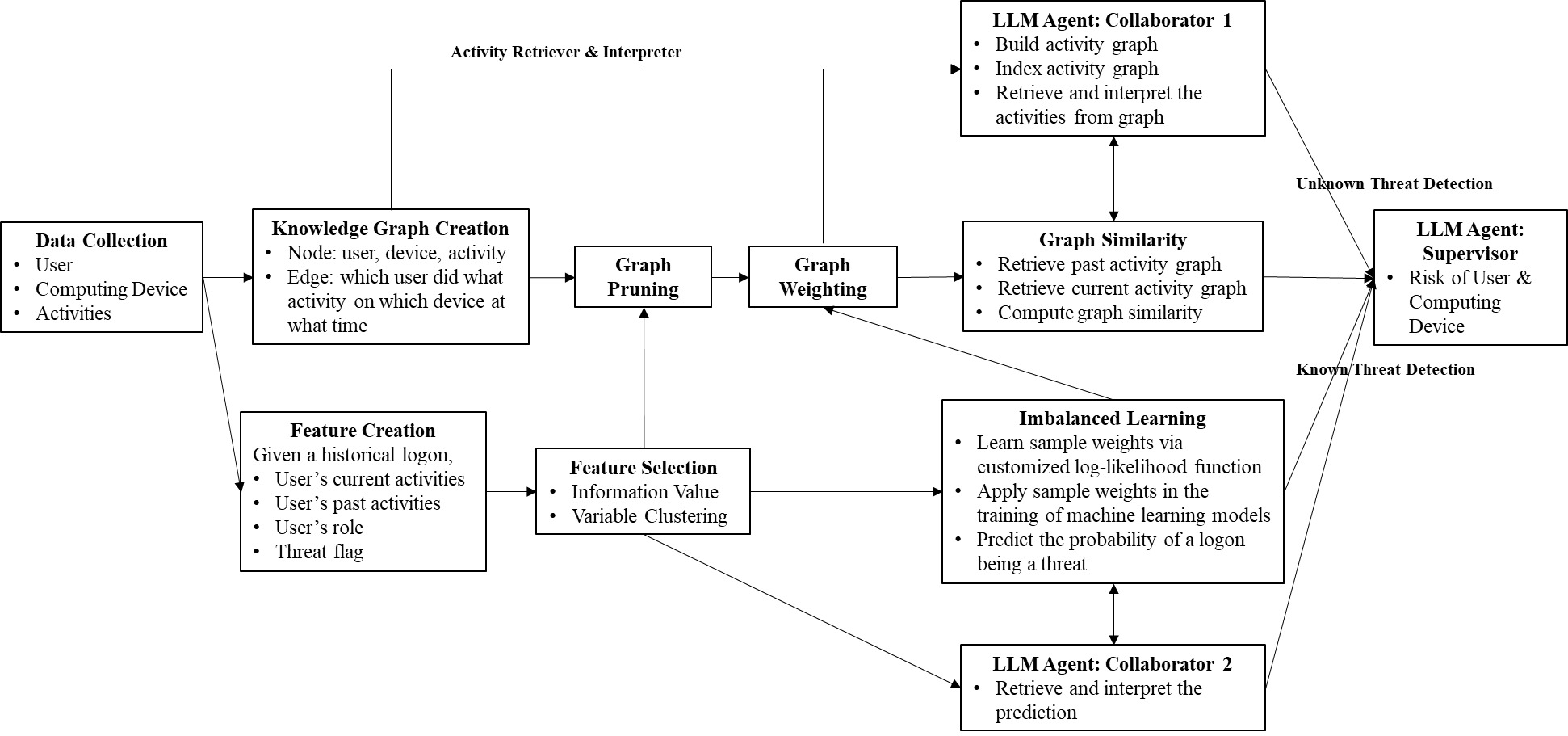}
\caption{Multi-agent AI Framework of Network Threat Detection}
\label{fig: AAI}
\end{figure}

Periodically, Collaborator~1 also computes graph similarity: Measures the weighted Jaccard between the current user graph $G_t$ and a reference graph (e.g., $G_{t-\tau}$ or a baseline).  A drop in this similarity score signals an anomalous shift in user behavior, even if the classifier did not flag it.  These signals are stored for reporting.

The Supervisor agent handles user interaction.  Upon a user query (e.g., 'What recent user behaviors look suspicious?'), the supervisor LLM generates structured queries for the graph database (using standard query languages or prompt-based retrieval). It may ask Collaborator~1 to list subgraphs around suspicious nodes, or ask Collaborator~2 for classification probabilities.  It then synthesizes these into a human-readable interpretation (e.g., 'User X's recent file access pattern is unusual given their history' or 'A new device connection to server Y matches no known normal behavior'). The supervisor thus bridges the gap between automated graph analytics and analyst understanding.

In the following, we embed theoretical analysis into key parts of this system.  First, we analyze the weighted log-likelihood used by Collaborator~2.  Second, we bound the deviation of the weighted Jaccard similarity (used by Collaborator~1).  Third, we give a game-theoretic view of the LLM agents’ collaboration in Section \ref{llm_collaboration_game}.

\begin{figure} [h!]
\centering
\includegraphics[scale=0.3]{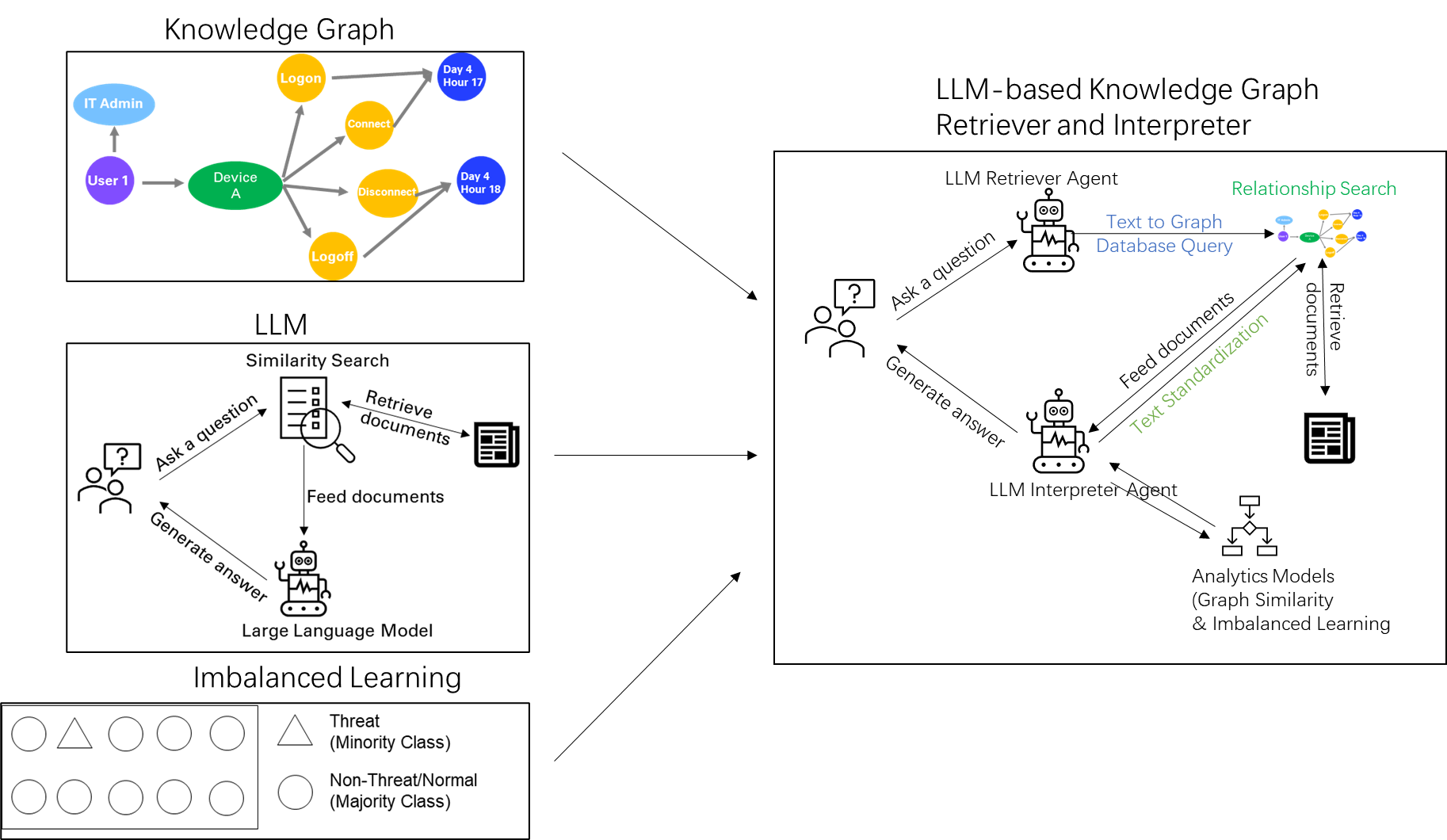}
\caption{LLM-based Knowledge Graph Retriever and Interpreter}
\label{fig: llm_interpreter}
\end{figure}

\subsection{LLM Query Process}
Figure \ref{fig: llm_interpreter} illustrates how agents exchange information via natural language and graph queries. Each agent can translate between text and data operations: graph queries, classifier invocations, and explanation generation. Through iterative prompting, the agents refine their analysis: for example, the supervisor may refine a question like “Why is node X flagged?” and get successive clarifications from collaborators before answering.

Compared to the typical query process of LLM in Figure \ref{fig: LLM QA}, the following additional functionalities are added to our multi-agent LLM query process. 
\begin{itemize}
    \item Multiple types of knowledge base including user-activity knowledge graph and documents: This ensures more comprehensive information to be considered.
    \item Interpretation from analytic models including graph similarity and imbalanced learning: This avoids the fine-tuning of LLM for specific purposes, saving money and improving efficiency.
\end{itemize}

\subsection{Threat Detection Metrics}
We use two complementary detection signals.  The first is the weighted classifier score from Collaborator~2.  The second is a similarity-based anomaly score: the weighted Jaccard index between the current graph and a reference.  Concretely, if $G_t$ and $G_{t-1}$ have total weights $W_{t-1}$ and $W_t$, we compute Equation \ref{jaccard_zhu}. A value $J_t<1$ indicates new or changed edges.  In Section~5.2.4 we show that if the total weight change $\Delta$ between $G_{t-1}$ and $G_t$ is small relative to $W$, then $J_t$ is close to 1; therefore, an abrupt drop in $J$ flags an anomaly.

\begin{equation} \label{jaccard_zhu}
\begin{aligned}
J_t = J(G_{t-1},G_t) 
= \frac{\sum_e \min(w_{t-1}(e),w_{t}(e))}{\sum_e \max(w_{t-1}(e),w_{t}(e))}
\end{aligned}
\end{equation}

\subsection{Weighted Imbalanced Classifier}
Collaborator~2 uses a weighted logistic regression.  Let $(x_i,y_i)$ be data points (features $x_i$ from the graph, label $y_i\in\{0,1\}$ for benign/threat).  We assign weight $w_i=\alpha$ if $y_i=1$ (threat) and $w_i=1$ if $y_i=0$.  The weighted log-likelihood loss can be found in Equation \ref{log_likelihood_data}, where $p_\theta(x)=1/(1+\exp(-\theta^\top x))$.  This upweights rare positives. We analyze the statistical behavior of this estimator next.

\begin{equation} \label{log_likelihood_data}
\begin{aligned}
    \min_{\beta, \lambda} & -\sum_{i=1}^{m}[\lambda_i y_i log(\pi(\beta^T x_i)) + \\ 
    & (1-y_i)log(1-\pi(\beta^T x_i))]
\end{aligned}
\end{equation}

\subsubsection{Imbalanced Learning Consistency}
We formalize the consistency and generalization of the weighted log-likelihood estimator in the imbalanced setting.  Let the data $(x_i,y_i)$ be i.i.d.\ with true model $P(y=1|x)=p_{\theta^*}(x)$.  The population risk is $R(\theta)=E_{X,Y}[w(Y)\ell(Y,p_\theta(X))]$, where $w(1)=\alpha>1$, $w(0)=1$, and $\ell(y,p)=-[y\ln p+(1-y)\ln(1-p)]$.  The empirical risk is $\hat R_n(\theta)=\frac{1}{n}\sum_{i=1}^n w(y_i)\ell(y_i,p_\theta(x_i))$.

\begin{theorem}
Under standard regularity conditions (bounded features, model identifiable), the minimizer $\hat\theta_n$ of the empirical weighted loss is consistent: $\hat\theta_n \stackrel{p}{\to}\theta^*$ as $n\to\infty$.  Moreover, with probability at least $1-\delta$, the uniform deviation
\begin{equation} \label{eq_theorem1}
\begin{aligned}
\sup_{\theta}\big|\hat R_n(\theta) - R(\theta)\big| \;=\; O\Big(\sqrt{\frac{\log(1/\delta)}{n}}\Big)
\end{aligned}
\end{equation}
assuming $x$ has bounded norm.  Consequently, $\hat\theta_n$ has generalization error converging at rate $O(1/\sqrt{n})$.
\end{theorem}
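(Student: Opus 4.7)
The plan is to treat $\hat\theta_n$ as a standard M-estimator and carry out the usual two-step argument: prove uniform convergence $\sup_\theta \lvert \hat R_n(\theta) - R(\theta)\rvert \to 0$ at a $1/\sqrt n$ rate, then combine this with identifiability of the population minimizer to obtain consistency. First I would restrict attention to a compact parameter ball $\Theta = \{\theta : \lVert\theta\rVert \leq B\}$, justified by coercivity of the weighted negative log-likelihood and the standard regularity assumptions. Together with the bounded-feature assumption $\lVert x\rVert \leq R$, this confines the linear score $\theta^\top x$ to $[-BR, BR]$, so $p_\theta(x)$ stays bounded away from $0$ and $1$; the per-sample loss $w(y)\ell(y, p_\theta(x))$ is then uniformly bounded by some $M = O(\alpha BR)$ and the map $\theta \mapsto w(y)\ell(y, p_\theta(x))$ is $L$-Lipschitz with $L = O(\alpha R)$. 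This confinement is what rescues the log-loss from its global unboundedness and is the single most important preparatory step.

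\paragraph{Uniform convergence and rate.}
For Equation \ref{eq_theorem1}, I would bound $\sup_\theta \lvert \hat R_n(\theta) - R(\theta)\rvert$ by symmetrization, writing it as an expected Rademacher complexity plus a concentration term. Talagrand's contraction lemma, applied to the $L$-Lipschitz loss, reduces matters to the Rademacher complexity of the linear class $\{x \mapsto \theta^\top x : \theta \in \Theta\}$, which is $O(BR/\sqrt n)$; multiplying by $L$ yields expected uniform deviation $O(\alpha/\sqrt n)$. McDiarmid's bounded-differences inequality (per-sample fluctuation $M/n$) then upgrades this to a high-probability bound of order $\sqrt{\log(1/\delta)/n}$, matching the theorem. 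The standard excess-risk inequality $R(\hat\theta_n) - R(\theta^*) \leq 2\sup_\theta \lvert \hat R_n(\theta) - R(\theta)\rvert$ converts this into the $O(1/\sqrt n)$ generalization rate stated in the theorem.

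\paragraph{Consistency and the main obstacle.}
For $\hat\theta_n \stackrel{p}{\to} \theta^*$, I would invoke the standard argmin-continuity argument (e.g., van der Vaart, Theorem 5.7): uniform convergence of $\hat R_n$ to the continuous, strictly convex $R$ on the compact set $\Theta$, together with identifiability of a well-separated population minimizer, yields convergence in probability of the minimizer. The main obstacle, however, is verifying that this population minimizer actually equals $\theta^*$ rather than some weighted-loss analogue: computing $\nabla R(\theta^*)$ under well-specification shows that asymmetric weights $w(1)=\alpha \neq w(0)=1$ generically shift the minimizer away from $\theta^*$ by a term proportional to $(\alpha-1)\,E[p_{\theta^*}(X)(1-p_{\theta^*}(X))X]$. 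Reconciling this with the theorem's conclusion requires either an explicit intercept/logit correction absorbing the weight-induced bias, or reinterpreting $\theta^*$ as the weighted-population minimizer itself. Once this identification is pinned down, the rest of the proof is the routine uniform-convergence machinery described above.
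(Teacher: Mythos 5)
Your proposal follows the same two-step M-estimation skeleton as the paper's proof (uniform convergence of the empirical weighted risk, then argmin continuity), but it executes the first step with genuinely different and more defensible machinery. The paper justifies the uniform deviation bound by appealing to ``Hoeffding's inequality for bounded loss,'' which by itself only controls $|\hat R_n(\theta)-R(\theta)|$ at a \emph{fixed} $\theta$; passing to the supremum over a continuum of parameters requires exactly the symmetrization, Talagrand contraction, and Rademacher-complexity argument (plus McDiarmid for the high-probability statement) that you supply, together with the explicit restriction to a ball $\|\theta\|\le B$ that makes the log-loss bounded and Lipschitz. So your route is the rigorous version of what the paper gestures at, and the excess-risk inequality you use to conclude the $O(1/\sqrt n)$ generalization rate matches the paper's ``$R(\hat\theta_n)\approx R(\theta^*)$'' step. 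Beyond that, you have put your finger on a real issue the paper's proof passes over in silence: with asymmetric weights $w(1)=\alpha\neq w(0)=1$, the minimizer of the weighted population risk $R$ is \emph{not} $\theta^*$ under the well-specified model $P(y=1|x)=p_{\theta^*}(x)$ --- the pointwise optimum satisfies $\mathrm{logit}(p)=\ln\alpha+\theta^{*\top}x$, so the weighted MLE converges to $\theta^*$ only after an intercept correction of $\ln\alpha$ (or if $\theta^*$ is reinterpreted as the weighted-population minimizer, as you suggest). The paper's proof asserts $\hat\theta_n\to\theta^*$ and $\sqrt n(\hat\theta_n-\theta^*)\to N(0,I^{-1})$ without addressing this, so your caveat is not a defect of your argument but a correction the theorem statement itself needs; with that identification pinned down, your proof is complete and strictly more careful than the one in the paper.
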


\begin{proof}
\ The consistency and asymptotic normality of weighted likelihood estimators are well known ~\citep{Xue2021}.  Here the weighted loss is a convex surrogate of the classification 0-1 loss and is Lipschitz in $\theta$ for bounded $x$.  By standard uniform convergence (e.g.,\ Hoeffding’s inequality for bounded loss), we have with high probability
\begin{equation} \label{eq_theorem1_2}
\begin{aligned}
\sup_{\|\theta\|\le B} \Big|\hat R_n(\theta)-R(\theta)\Big| \le O\Big(\sqrt{\frac{\log(1/\delta)}{n}}\Big)
\end{aligned}
\end{equation}
Thus the empirical minimizer $\hat\theta_n$ satisfies $R(\hat\theta_n)\approx R(\theta^*)$.  As $n\to\infty$, this implies $\hat\theta_n\to\theta^*$.  Detailed asymptotic normality can be derived from the weighted score equations, yielding $\sqrt{n}(\hat\theta_n-\theta^*)\to N(0,I^{-1})$ under regularity.  This gives the claimed rates.
\end{proof}

This theorem shows that despite class imbalance (via weight $\alpha$), the weighted MLE is consistent and converges at the usual $O(1/\sqrt{n})$ rate.  In practice, we choose $\alpha$ (or class-based weights) to balance precision and recall on rare threats.

\subsubsection{Graph Similarity Bound} \label{graph_similarity_bound}
Next we quantify how much the weighted Jaccard similarity can drop under bounded graph updates.  Let $G$ and $H$ be two graphs with total (edge) weight $W$ each, and suppose the total absolute change in edge weight is $\Delta = \sum_e |w_H(e)-w_G(e)|$.  We assume $W_H=W_G=W$ after possible normalization.  Then:

\begin{theorem}
\label{thm:jaccard}
Let $G$ and $H$ be weighted graphs with total weight $W$ and weight-change $\Delta$.  Then the weighted Jaccard similarity satisfies
\begin{equation} \label{eq_theorem2_0}
\begin{aligned}
J(G,H) \;=\; \frac{\sum_e \min(w_G(e),w_H(e))}{\sum_e \max(w_G(e),w_H(e))} \\ \;\ge\; \frac{W-\Delta}{W+\Delta}\;=\;1 - \frac{2\Delta}{W+\Delta}
\end{aligned}
\end{equation}
In particular, if $\Delta \ll W$, then $J(G,H)\approx 1 - 2\Delta/W$. 
\end{theorem}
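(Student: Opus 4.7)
The plan is to reduce the bound to two pointwise inequalities that hold edge-by-edge, then sum them over the edge set and take the ratio. Specifically, for any nonnegative reals $a,b$ I would establish
\[
\min(a,b) \;\ge\; a - |a-b|, \qquad \max(a,b) \;\le\; a + |a-b|,
\]
both of which follow by a trivial case split on whether $a\le b$ or $a>b$. Applying these with $a=w_G(e)$ and $b=w_H(e)$ at every edge and summing gives, via the identity $W=\sum_e w_G(e)$ and the definition $\Delta=\sum_e |w_H(e)-w_G(e)|$, the two aggregate inequalities
\[
\sum_e \min(w_G(e),w_H(e)) \;\ge\; W-\Delta, \qquad
\sum_e \max(w_G(e),w_H(e)) \;\le\; W+\Delta.
\]

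With both sums bounded in the ``right'' direction and positive whenever $\Delta<W$, I would then divide to obtain
\[
J(G,H) \;=\; \frac{\sum_e \min(w_G(e),w_H(e))}{\sum_e \max(w_G(e),w_H(e))} \;\ge\; \frac{W-\Delta}{W+\Delta},
\]
and finish with the purely algebraic identity $\tfrac{W-\Delta}{W+\Delta}=1-\tfrac{2\Delta}{W+\Delta}$. The small-$\Delta$ approximation $J\approx 1-2\Delta/W$ then falls out by noting $W+\Delta\approx W$ to first order; I would mention this as a one-line Taylor remark rather than a separate calculation.

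There is no real obstacle here—the argument is a textbook ratio-of-sums estimate. The only points to be a bit careful about are (i) the assumption $W_G=W_H=W$, which is used implicitly when I replace $\sum_e w_G(e)$ by $W$ after applying the min-bound (a symmetric version with $w_H$ works identically, so equal totals are what makes both sides clean), and (ii) the edge cases where $\Delta\ge W$, for which the stated lower bound becomes vacuous or negative and the inequality holds trivially since $J\ge 0$. I would note these in one sentence each and otherwise present the proof as a short chain of displayed inequalities.
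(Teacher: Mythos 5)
Your proposal is correct and follows essentially the same route as the paper's proof: the same pointwise bounds $\min(a,b)\ge a-|a-b|$ and $\max(a,b)\le a+|a-b|$, summed over edges and divided to give $\frac{W-\Delta}{W+\Delta}$. Your added remarks on the vacuous case $\Delta\ge W$ and on where the equal-totals assumption enters are sensible but do not change the argument.
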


\begin{proof}
\ By definition, for each edge $e$ we have
$\min(w_G(e),w_H(e)) \ge w_G(e) - |w_H(e)-w_G(e)|$ and 
$\max(w_G(e),w_H(e)) \le w_G(e) + |w_H(e)-w_G(e)|$.  Summing over edges:
\begin{equation} \label{eq_theorem2_1}
\begin{aligned}
& \sum_e \min(w_G(e),w_H(e)) \\ & \;\ge\; \sum_e w_G(e) - \sum_e|w_H(e)-w_G(e)| \;=\; W - \Delta
\end{aligned}
\end{equation}
\begin{equation} \label{eq_theorem2_2}
\begin{aligned}
&\sum_e \max(w_G(e),w_H(e)) \\ &\;\le\; \sum_e w_G(e) + \sum_e|w_H(e)-w_G(e)| \;=\; W + \Delta
\end{aligned}
\end{equation}
Therefore 
\begin{equation} \label{eq_theorem2_3}
\begin{aligned}
J(G,H) \;=\; \frac{\sum_e \min(\cdot)}{\sum_e \max(\cdot)} \;\ge\; \frac{W-\Delta}{W+\Delta}
\end{aligned}
\end{equation}
Rearranging yields the stated bound $1-J \le 2\Delta/(W+\Delta)$.
\end{proof}

This bound shows that if the total graph weight changes are small, the Jaccard similarity remains near 1.  Only when $\Delta$ is large compared to $W$ (anomalous burst of edges) will $J$ drop significantly.  Thus the Jaccard metric provides a calibrated anomaly score for graph changes.

\subsection{LLM Collaboration as a Cooperative Game} \label{llm_collaboration_game}
We model the LLM agents’ collaboration as a cooperative game.  Let $N=\{1,2,3\}$ be the agents and $v(S)$ the total detection value when coalition $S\subseteq N$ cooperates (e.g.,\ the combined performance).  We assume $v(\emptyset)=0$ and that $v$ is \emph{supermodular} (convex): for all $S,T\subseteq N$,
\begin{equation} \label{LLM_collaboration}
\begin{aligned}
v(S\cup T) + v(S\cap T) \;\ge\; v(S)+v(T)
\end{aligned}
\end{equation}
This means adding an agent yields larger incremental benefit when the coalition is larger.  

\begin{theorem}
If the characteristic function $v$ is supermodular (convex), then the cooperative game has a non-empty core.  Equivalently, there exists a stable allocation of $v(N)$ among the agents such that no subset of agents would gain by deviating.  In particular, the Shapley value lies in the core (every convex game’s core is non-empty) \citep{Shapley1971}.
\end{theorem}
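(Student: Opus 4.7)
The plan is to prove this by constructing explicit allocations in the core via marginal-contribution vectors, following the classical Shapley (1971) argument for convex games, and then concluding that the Shapley value (a convex combination of such vectors) also lies in the core. The reason to take this route rather than invoke the Bondareva--Shapley theorem directly is that it produces concrete, interpretable allocations (each agent paid by its marginal contribution along some ordering), which is the natural economic interpretation for our three LLM agents.

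First, I would fix a permutation $\pi$ of $N=\{1,2,3\}$ and define the marginal contribution vector $x^\pi$ by $x^\pi_i = v(P_i^\pi \cup \{i\}) - v(P_i^\pi)$, where $P_i^\pi$ is the set of agents preceding $i$ in $\pi$. Efficiency $\sum_i x^\pi_i = v(N)$ follows from a telescoping sum along $\pi$ regardless of supermodularity. The substantive step is to show coalitional rationality: for every $S \subseteq N$, $\sum_{i\in S} x^\pi_i \ge v(S)$. I would prove this by enumerating the elements of $S$ in the order induced by $\pi$, writing $v(S)$ as a telescoping sum of its own marginal increments, and then comparing term by term using supermodularity: adding agent $i$ to $P_i^\pi$ (a superset of its predecessors inside $S$) contributes at least as much as adding $i$ to its predecessors inside $S$. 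Summing these inequalities yields $\sum_{i\in S} x^\pi_i \ge v(S)$, so $x^\pi$ lies in the core.

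Once each marginal vector $x^\pi$ is shown to be in the core, I would observe that the core is defined by finitely many linear inequalities and is therefore a convex set. The Shapley value $\phi(v) = \frac{1}{|N|!}\sum_\pi x^\pi$ is a convex combination of the $x^\pi$, hence also lies in the core. This gives both the non-emptiness of the core and the stronger statement that $\phi(v) \in \mathrm{core}(v)$, exactly as claimed.

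The main obstacle is the term-by-term comparison in the coalitional rationality step: one has to carefully index predecessors inside $S$ versus inside $N$ and apply the supermodular inequality $v(A\cup\{i\}) - v(A) \le v(B\cup\{i\}) - v(B)$ for $A\subseteq B$, $i\notin B$, which is equivalent to the stated inequality $v(S\cup T)+v(S\cap T)\ge v(S)+v(T)$. I would first verify this equivalence as a short lemma (by choosing $S=A\cup\{i\}$, $T=B$), and then apply it repeatedly. Everything else is bookkeeping.
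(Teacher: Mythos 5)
Your proposal is correct and follows essentially the same route as the paper: the paper's proof simply cites Shapley (1971) and asserts that ``one can verify'' the Shapley value satisfies $\sum_i\phi_i=v(N)$ and $\sum_{i\in S}\phi_i\ge v(S)$, and your marginal-vector argument (each $x^\pi$ is in the core by the telescoping-plus-increasing-differences comparison, and the Shapley value is their average over the convex core) is precisely the standard way to carry out that verification. You supply the details the paper omits, including the correct reduction of supermodularity to the increasing-differences inequality, so there is no gap.
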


\begin{proof}
\ This is a classical result (Shapley 1971) on convex cooperative games.  Supermodularity implies the Bondareva–Shapley balancedness condition holds, guaranteeing a non-empty core.  One can verify that the Shapley value allocation $\phi_i = \frac{1}{|N|!}\sum_{\pi} [v(\text{pre}_{\pi}(i)\cup\{i\}) - v(\text{pre}_{\pi}(i))]$ satisfies $\sum_i\phi_i=v(N)$ and $\sum_{i\in S}\phi_i\ge v(S)$ for every $S\subseteq N$.  Thus the Shapley value is in the core \citep{Shapley1971}, so the core is non-empty.  The intuition is that when agents’ contributions complement each other (convexity), no group can improve by breaking away.
\end{proof}

In our multi-agent setting, supermodularity is reasonable: having all three agents cooperate (graph builder, classifier, interpreter) yields more detection value than any subset.  The theorem guarantees there exists a stable way to share credit among the agents so that collaboration is beneficial for each subset.

\section{Modeling Process and Results}\label{sec:model}

\subsection{Knowledge Graph Creation} \label{knowledge_graph_creation}

A graph $G$ uses the nodes and edges to represent relational information about users, computing devices, and activities, as shown in Figure \ref{fig: graph_schema}.

\begin{itemize}
\item Nodes $V$: A node represents a user, user role, device, activity type (i.e., logon, email, file access, removable connect, removable disconnect, web visit, logoff) and activity time.
\item Edges $E$: The edges connect the user, the user role, the device, the activity type, and the activity time, which describe what the user did what activity on the device at what time. 
\end{itemize}

\begin{figure}[h!]
\centering
\includegraphics[scale=0.5]{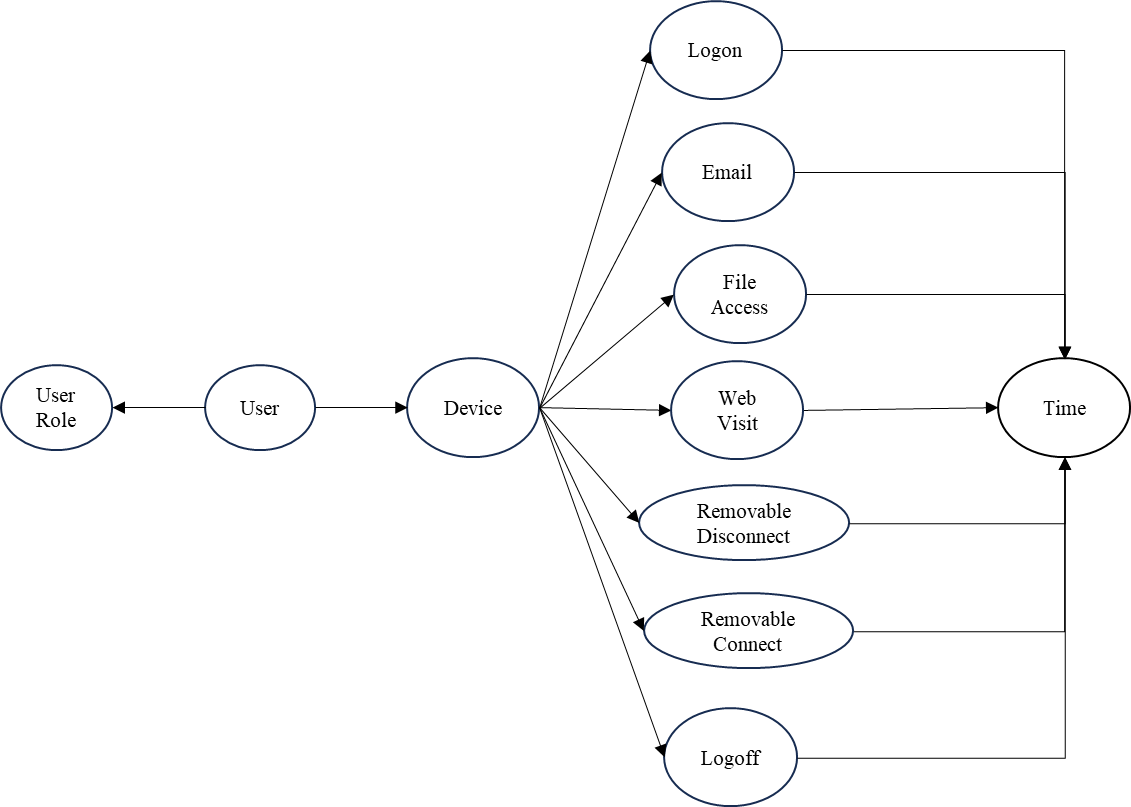}
\caption{User-Activity Knowledge Graph Schema}
\label{fig: graph_schema}
\end{figure}

\subsection{Graph Pruning and Graph Weighting using Imbalanced Learning techniques} \label{graph_prunning_weighting}

To reduce redundant and insignificant information in the Knowledge Graph and improve the algorithm efficiency, we prune and weigh the graph using Imbalanced Learning techniques, which evaluate how the information represented by the nodes and edges in the graph are related to the threats in the historical data. First, the numerical representations of the user activities in the graph and the historical threats are created as independent variables (i.e., features) and dependent variable respectively. Then, their relationships are examined through information value and variable clustering techniques filter out the independent variables with weak predictive power or redundant information. The nodes and edges representing weak or redundant information are pruned from the graph. Lastly, a predictive model is trained with a customized imbalanced learning technique to predict whether a logon session is a threat. The predicted value from this predictive model is used to weigh the nodes in the graph. 

\subsubsection{Feature Creation} \label{feature_creation}

In the predictive model, the dependent variable is a binary variable with 1 indicating a threat logon and 0 indicating a normal logon. The independent variables are 56 variables (that is, features) representing the current and past activities of the users, such as the number of executable files running in the current session and the number of executable files running in the past sessions. 

\subsubsection{Feature Selection} \label{feature_selection}

The relationships between these independent variables and the dependent variable are then examined through the information value, and the interrelationships among these independent variables are examined through the clustering of variables. 16 independent variables are selected to be used in the modelling. The nodes and edges representing the information of these 16 variables are kept.

\subsubsection{Imbalanced Learning} \label{imbalanced_learning}

In historical data, the percentage of threats detected is 0. 34\%. To mitigate data bias, the weight of each training sample $\lambda_i$ is first learned through a custom log-likelihood function ~\citep{zhang2022improving}, as shown in Equation \ref{log_likelihood_data}, from the training data, where $i$ is the training sample index, $y_i$ is the dependent variable value of the training sample $i$, $x_i$ is the independent variable vector value of the training sample $i$, and $\beta$ is the coefficient vector of independent variables. The training data are 70\% of the historical data while the remaining 30\% of historical data are used as validation data to evaluate the model performance later. 

The learned sample weights $\lambda_i$ are applied to the training process of machine learning models (e.g., Gradient Boosting Model). Two models are built in this experiment. Their performance is evaluated on the validation data based on the metrics of Gain and Area under Precision-Recall Curve, as shown in Table \ref{model_performance}. 

\begin{itemize}
\item Model 1: Gradient Boosting Model trained without learnable weights
\item Model 2: Gradient Boosting Model trained with learnable weights from Equation \ref{log_likelihood_data}
\end{itemize}

\begin{table}[h!]
\centering
\begin{tabular}{@{}l@{\quad}c@{\quad}c@{}}
\hline 
Performance Metric    & Model 1 & Model 2  \\ 
\hline 
\% captured true threats  \\among all true threats (gain)  \\ at top 3\% predicted risky logons & 56\% & 60\% \\
\hline 
\% captured true threats \\ among all true threats (gain) \\ at top 30\% predicted risky logons & 95\% & 98\% \\
\hline 
Area under Precision-Recall Curve   & 0.186 & 0.204   \\ \hline
\end{tabular}
\caption{Model Performance}
\label{model_performance}
\end{table}

Compared to Model 1, Model 2 trained with learnable weights can capture 4\% more true threats at the top of 3\% predicted risky logons and 3\% more of true threats at top 30\% predicted risky logons. The overall improvement is around 2\% under different probability cut-offs to convert the predicted threat probability into binary values. This can potentially prevent the loss of \$0.2 billion in 2021, \$0.3 billion in 2022, and \$0.4 billion in 2023 ~\citep{IC3}. 

Model 2 is used to predict the probability of threat for current and new logons. The predicted probabilities are used to weigh the logon activity nodes and their connected user nodes and device nodes in the graph.

\subsubsection{Graph Similarity} \label{graph_similarity}

To measure the change in user activity over time, we first build the current activity graph and the previous activity graph. Then the weighted similarity (e.g., Weighted Jaccard Similarity) between these two graphs is computed. 



To show the result, take the user CSC0217 as an example in Figure \ref{fig: graph_change_score}. Its current activity graph shows it logs on the device PC5866 in the afternoon and connects to a removable device, while its past activity graph shows it logged on the devices PC3742, PC6377, and PC2288 in the morning and visited some websites. The Jaccard Similarity Score between these two graphs is very small, resulting in a high activity change score. 

\begin{figure} [h!]
\centering
\includegraphics[scale=0.3]{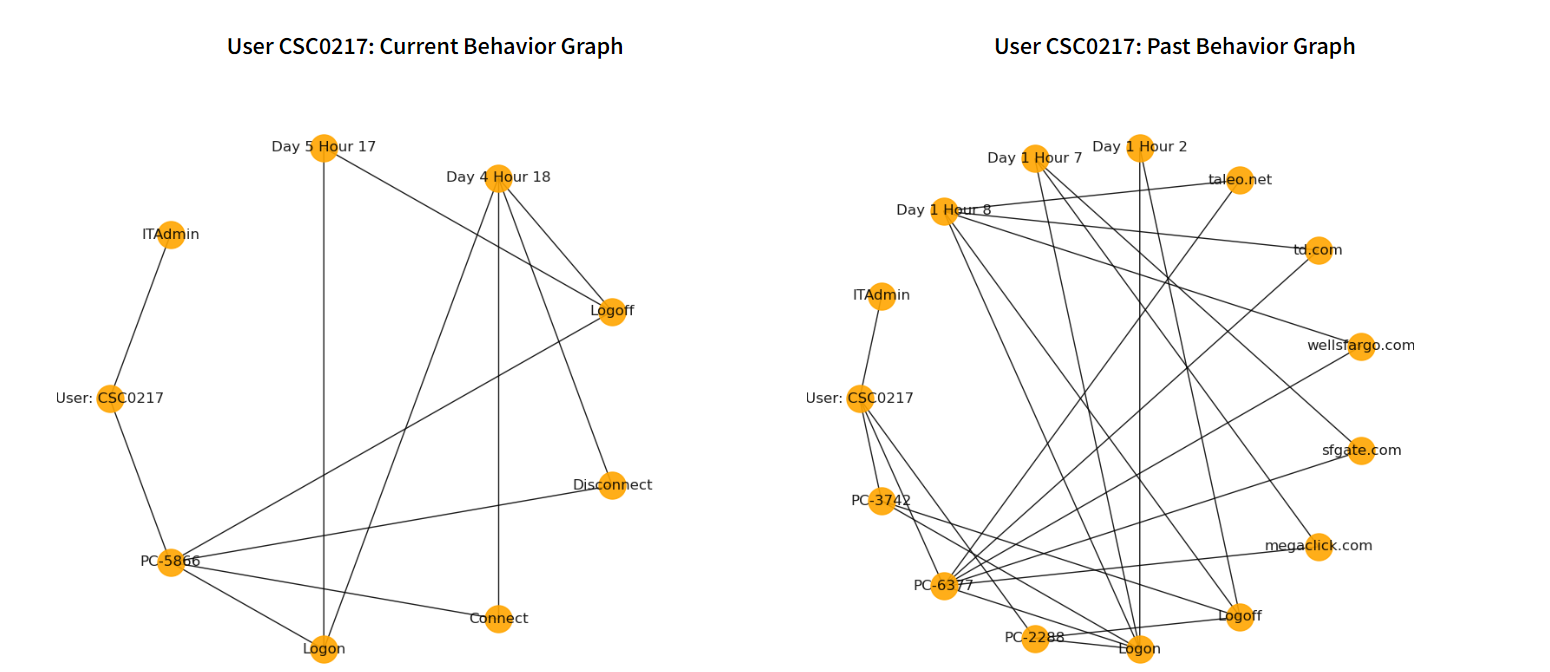}
\caption{User CSC0217 Activity Graph and Change Score}
\label{fig: graph_change_score}
\end{figure}

\subsection{Graph Retrieval and Interpretation using Large Language Model} \label{subsec: LLM}
\subsubsection{Graph Schema Creation - Extended}

The Content of the user’s Email, File, and Web Visits are important behavior factors. Due to its long text size, it typically cannot be integrated efficiently using traditional knowledge graph techniques. Thanks to the advancement of text embedding and vector databases, they can be efficiently integrated into Knowledge Graph by linking to their embedding values stored in a vector database, as shown in Figure \ref{fig: graph_schema_ext}.

\begin{figure} [h!]
\centering
\includegraphics[scale=0.5]{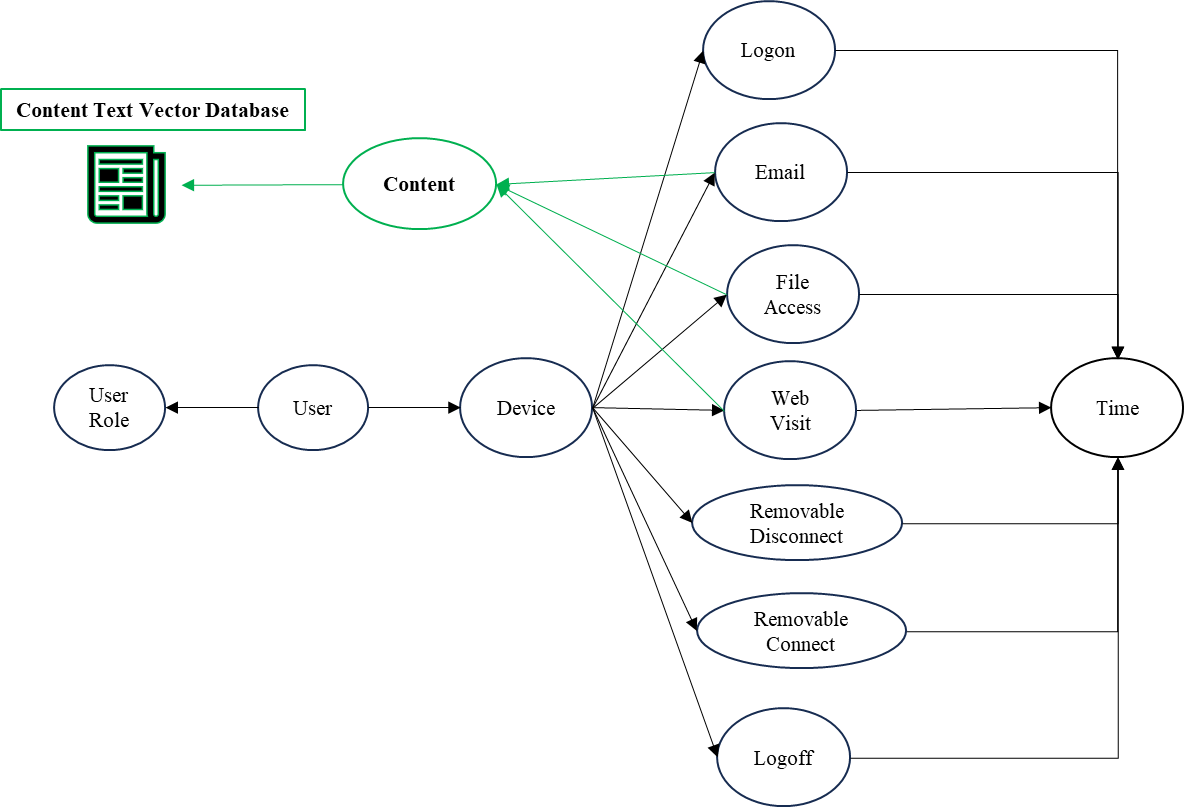}
\caption{User-Activity Knowledge Graph Schema Extended with Content Texts}
\label{fig: graph_schema_ext}
\end{figure}

\subsubsection{Large Language Model as Retriever and Interpreter}

To more efficiently retrieve and interpret users' activities and risks for network threat diagnosis and detection, LLM agents do the following work, as shown in Figure \ref{fig: llm_interpreter}.

\begin{itemize}
\item	LLM translates the user’s question in English into Graph Database query language and does the relationship-based search in Knowledge Graph.
\item	LLM standardizes the text data (e.g., user role) and improves the data quality in Knowledge Graph.
\item	LLM summarizes the user activity information and the content visited from the Knowledge Graph.
\item LLM calls to compute the graph similarity between the current activity graph and the past activity graph for the likelihood of being an unknown threat.
\item LLM calls to learn through the imbalanced learning model for the likelihood of being a known threat.
\item	LLM interprets the user’s activities for the user’s interest and intention based on its own training knowledge base from the whole Web.
\end{itemize}

An application demo of this process can be found in Figure \ref{fig: demo_data_flow}. In the demo, after we ask about a user's activity changes in the current time period compared with a historical time period or reference time period, we can get the answers about the user's activity summaries, changes, and risk interpretations. For example, as shown in Figure \ref{fig: demo}, we ask about the user Lisa's activity change in January 2024 compared to December 2023. In the generated answer, we can get to know Lisa's activity changes by shifting from a mix of web visits and logon/logoffs in December 2023 to solely Logon/Logoffs with Removable Connect and Disconnect on different devices, which explains its likelihood of unknown threat is 80\% and its likelihood of known threat is 70\%. 

\begin{figure} [h!]
\centering
\includegraphics[scale=0.4]{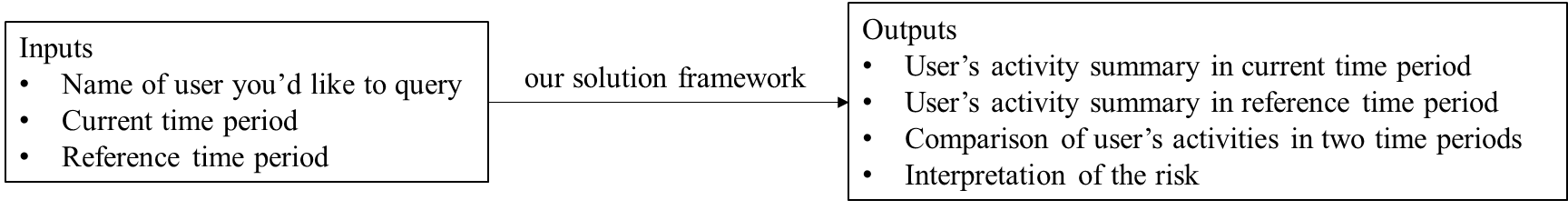}
\caption{Application Demo Data Workflow}
\label{fig: demo_data_flow}
\end{figure}

\begin{figure} [h!]
\centering
\includegraphics[scale=0.7]{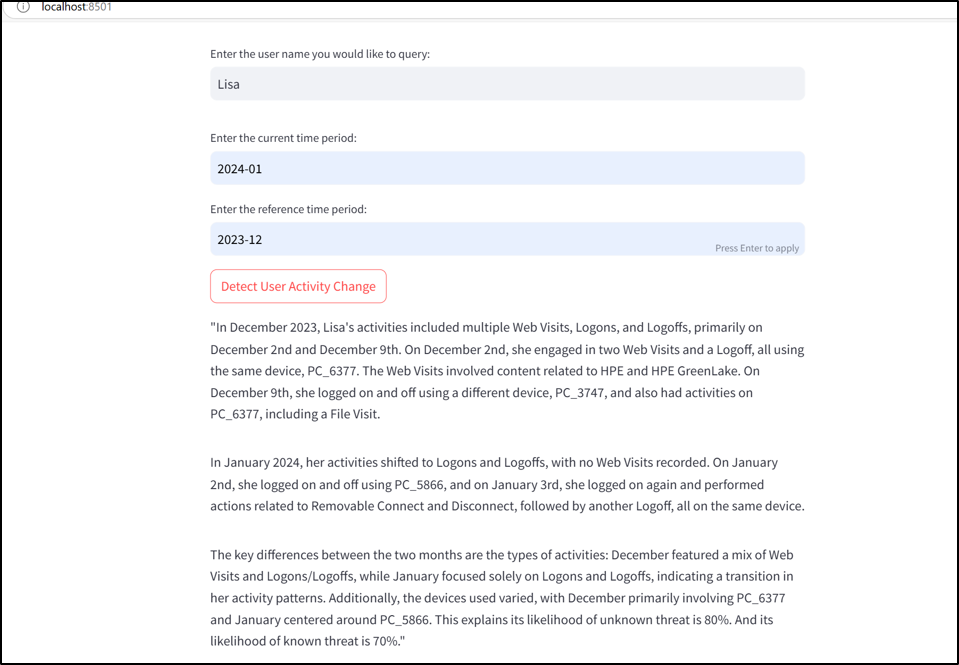}
\caption{Application Demo Results}
\label{fig: demo}
\end{figure}

In this experiment, the open source graph database Nebula is used to build and store user activity graphs, the Python package llama-index is used to index Knowledge Graph, the text-embedding model Text-embedding-3-large is used to vectorize the contents, and the LLM model GPT-4o mini is used to query and generate the answer. 

For the content data used in this experiment, we first tested the contents in the Insider Threat Test Dataset, but got the response: “The text appears to be a mix of random words and phrases that do not form coherent sentences or convey a clear message. It seems to be a jumble of disconnected information that does not provide a clear context or topic. It is difficult to interpret any specific interest or meaning from this text. Alternatively, for demonstration purposes, we used the online public documents to build the content nodes in the graph. 

\section{Future Work}
We outline a formal roadmap for system benchmarking and performance goals:

\begin{itemize}
    \item \textbf{Performance Metrics:} Define specific targets for latency and throughput.  For example, under a streaming rate of thousands of events per second, the system should update the knowledge graph and compute anomaly scores with minimal delay (e.g., $\le100$ ms per batch).  We will measure graph update latency (time to process $\Delta E_t$), classifier throughput (events/sec), and LLM response time.  These goals can be guided by current enterprise requirements and profiling studies.
    \item \textbf{Scalability:} Evaluate scalability under increasing data rates and graph sizes.  We plan stress tests with longer time windows, more users, and faster event streams to ensure that the system remains responsive.  Implementations will leverage incremental graph database technologies and parallel processing for ML inference.  We will also optimize LLM calls (e.g., through caching or summarization) to handle frequent queries.
    \item \textbf{Generalizability:} Test across multiple threat scenarios and datasets.  Beyond the CERT insider dataset, we will apply our framework to other network datasets (e.g., intrusion logs, phishing datasets) to assess robustness.  Cross-validation experiments will check how the weighted classifier and similarity detector perform on different threat types.  In addition, we will evaluate the LLM explanations through a human expert review to ensure interpretability across contexts.
    \item \textbf{Baseline Comparisons:} Establish baselines for detection and performance.  These include a non-graph ML approach (e.g.,\ a random forest on raw features), a standalone graph anomaly detector (e.g.,\ based on AD kernels \citep{Akoglu2015}), and a single LLM interpretation.  We will compare detection accuracy (ROC, precision/recall) and latency against these baselines to quantify the benefit of our combined approach.
\end{itemize}

This roadmap ensures that future work on this system will be evaluated against concrete performance targets and diverse scenarios, providing rigorous and repeatable results.

\section{Conclusion}\label{sec:Conclusion}
To help enhance existing practices of using analytics, machine learning, and artificial intelligence methods to detect network threats, we propose a multi-agent AI framework, which helps effectively quantify the threat risk of users and computing devices based on their complex activities. In this solution, a Knowledge Graph is used to analyze user activity patterns and threat signals. Then, imbalanced learning techniques are used to prune and weigh the knowledge graph, and also calculate the risk of known threats. Finally, a LLM is used to retrieve and interpret information from Knowledge Graph and Imbalanced Learning Model. This approach integrates the strengths of LLMs in contextual reasoning with the structured relationship modeling capabilities of Knowledge Graphs to monitor, predict, and explain potential threats as they unfold. The synergy between these components ensures both depth and immediacy in threat detection, making the system highly effective in dynamic environments. The preliminary results show that the solution improves threat capture rate by 3\%-4\% and adds natural language interpretations of the risk predictions based on the user activities. Furthermore, a demo application has been built to show how the proposed solution can be deployed and used.

\section*{Acknowledgement}
We truly appreciate the support and feedback from our team members and leaders, since we initialized this Hackathon project using open data to solve open problems in 2023. Their names will be disclosed when we reach the publication-ready phase.

\bibliographystyle{abbrvnat}  
\bibliography{ref} 

\begin{thebibliography}{37}
\providecommand{\natexlab}[1]{#1}
\providecommand{\url}[1]{\texttt{#1}}
\expandafter\ifx\csname urlstyle\endcsname\relax
  \providecommand{\doi}[1]{doi: #1}\else
  \providecommand{\doi}{doi: \begingroup \urlstyle{rm}\Url}\fi

\bibitem[Akoglu et~al.(2015)Akoglu, Tong, and Koutra]{Akoglu2015}
L.~Akoglu, H.~Tong, and D.~Koutra.
\newblock Graph-based anomaly detection and description: a survey.
\newblock \emph{Data Mining and Knowledge Discovery}, 29\penalty0 (3):\penalty0 626--688, 2015.

\bibitem[Brown et~al.(2020)Brown, Mann, Ryder, Subbiah, Kaplan, Dhariwal, Neelakantan, Shyam, Sastry, Askell, Agarwal, Herbert-Voss, Krueger, Henighan, Child, Ramesh, Ziegler, Wu, Winter, Hesse, Chen, Sigler, Litwin, Gray, Chess, Clark, Berner, McCandlish, Radford, Sutskever, and Amodei]{Brown2020}
T.~B. Brown, B.~Mann, N.~Ryder, M.~Subbiah, J.~Kaplan, P.~Dhariwal, A.~Neelakantan, P.~Shyam, G.~Sastry, A.~Askell, S.~Agarwal, A.~Herbert-Voss, G.~Krueger, T.~Henighan, R.~Child, A.~Ramesh, D.~M. Ziegler, J.~Wu, C.~Winter, C.~Hesse, M.~Chen, E.~Sigler, M.~Litwin, S.~Gray, B.~Chess, J.~Clark, C.~Berner, S.~McCandlish, A.~Radford, I.~Sutskever, and D.~Amodei.
\newblock Language models are few-shot learners.
\newblock \emph{OpenAI Technical Report}, 2020.

\bibitem[Chen et~al.(2022)Chen, Dong, Lv, Song, Liu, Zhu, Xu, Chen, Ji, and Fan]{chen2022apt}
T.~Chen, C.~Dong, M.~Lv, Q.~Song, H.~Liu, T.~Zhu, K.~Xu, L.~Chen, S.~Ji, and Y.~Fan.
\newblock Apt-kgl: An intelligent apt detection system based on threat knowledge and heterogeneous provenance graph learning.
\newblock \emph{IEEE Transactions on Dependable and Secure Computing}, 2022.

\bibitem[Chen et~al.(2018)Chen, Yan, Han, Wang, Peng, Wang, and Yang]{chen2018machine}
Z.~Chen, Q.~Yan, H.~Han, S.~Wang, L.~Peng, L.~Wang, and B.~Yang.
\newblock Machine learning based mobile malware detection using highly imbalanced network traffic.
\newblock \emph{Information Sciences}, 433:\penalty0 346--364, 2018.

\bibitem[Chong et~al.(2020)Chong, Ding, Yan, and Pan]{chong2020graph}
Y.~Chong, Y.~Ding, Q.~Yan, and S.~Pan.
\newblock Graph-based semi-supervised learning: A review.
\newblock \emph{Neurocomputing}, 408:\penalty0 216--230, 2020.

\bibitem[CISA(2023)]{CISA}
CISA.
\newblock Cyber incident response to public safety answering points: A state’s perspective.
\newblock \url{https://www.cisa.gov/sites/default/files/publications/22\_0414\_cyber\_incident\_case\_studies\_state\_final\_508c.pdf}, 2023.
\newblock Accessed: November 1, 2024.

\bibitem[FBI(2024)]{IC3}
FBI.
\newblock Internet crime report 2023.
\newblock \url{https://www.ic3.gov/AnnualReport/Reports/2023\_IC3Report.pdf}, 2024.
\newblock Accessed: November 1, 2024.

\bibitem[Guastalla et~al.(2023)Guastalla, Li, Hekmati, and Krishnamachari]{guastalla2023application}
M.~Guastalla, Y.~Li, A.~Hekmati, and B.~Krishnamachari.
\newblock Application of large language models to ddos attack detection.
\newblock In \emph{International Conference on Security and Privacy in Cyber-Physical Systems and Smart Vehicles}, pages 83--99. Springer, 2023.

\bibitem[He and Ma(2013)]{he2013imbalanced}
H.~He and Y.~Ma.
\newblock Imbalanced learning: foundations, algorithms, and applications.
\newblock 2013.

\bibitem[Hogan et~al.(2021)Hogan, Blomqvist, Cochez, d'Amato, de~Melo, Guti\'errez, Neumaier, Polleres, Schurr, and Sequeda]{Hogan2021}
A.~Hogan, E.~Blomqvist, M.~Cochez, C.~d'Amato, G.~de~Melo, C.~Guti\'errez, S.~Neumaier, A.~Polleres, A.~Schurr, and J.~Sequeda.
\newblock Knowledge graphs.
\newblock \emph{ACM Computing Surveys}, 54\penalty0 (4):\penalty0 71:1--71:37, 2021.

\bibitem[Huang et~al.(2019)Huang, Chen, Lou, Hongzhou, Wu, and Yan]{huang2019constructing}
H.~Huang, Y.~Chen, B.~Lou, Z.~Hongzhou, J.~Wu, and K.~Yan.
\newblock Constructing knowledge graph from big data of smart grids.
\newblock In \emph{2019 10th International Conference on Information Technology in Medicine and Education (ITME)}, pages 637--641. IEEE, 2019.

\bibitem[Janev et~al.(2020)Janev, Graux, Jabeen, and Sallinger]{janev2020knowledge}
V.~Janev, D.~Graux, H.~Jabeen, and E.~Sallinger.
\newblock \emph{Knowledge graphs and big data processing}.
\newblock Springer Nature, 2020.

\bibitem[Jarnac et~al.(2023)Jarnac, Couceiro, and Monnin]{jarnac2023relevant}
L.~Jarnac, M.~Couceiro, and P.~Monnin.
\newblock Relevant entity selection: Knowledge graph bootstrapping via zero-shot analogical pruning.
\newblock In \emph{Proceedings of the 32nd ACM International Conference on Information and Knowledge Management}, pages 934--944, 2023.

\bibitem[Kalyuzhnaya et~al.(2025)Kalyuzhnaya, Mityagin, Lutsenko, Getmanov, Aksenkin, Fatkhiev, Fedorin, Nikitin, Chichkova, Vorona, et~al.]{kalyuzhnaya2025llm}
A.~Kalyuzhnaya, S.~Mityagin, E.~Lutsenko, A.~Getmanov, Y.~Aksenkin, K.~Fatkhiev, K.~Fedorin, N.~O. Nikitin, N.~Chichkova, V.~Vorona, et~al.
\newblock Llm agents for smart city management: Enhancing decision support through multi-agent ai systems.
\newblock \emph{Smart Cities (2624-6511)}, 8\penalty0 (1), 2025.

\bibitem[Lindauer(2020)]{dataset}
B.~Lindauer.
\newblock Insider threat test dataset.
\newblock https://doi.org/10.1184/R1/12841247.v1, 2020.
\newblock Accessed: November 1, 2024.

\bibitem[Ma et~al.(2021)Ma, Wu, Xue, Yang, Zhou, Sheng, Xiong, and Akoglu]{ma2021comprehensive}
X.~Ma, J.~Wu, S.~Xue, J.~Yang, C.~Zhou, Q.~Z. Sheng, H.~Xiong, and L.~Akoglu.
\newblock A comprehensive survey on graph anomaly detection with deep learning.
\newblock \emph{IEEE transactions on knowledge and data engineering}, 35\penalty0 (12):\penalty0 12012--12038, 2021.

\bibitem[Min et~al.(2023)Min, Ross, Sulem, Veyseh, Nguyen, Sainz, Agirre, Heintz, and Roth]{min2023recent}
B.~Min, H.~Ross, E.~Sulem, A.~P.~B. Veyseh, T.~H. Nguyen, O.~Sainz, E.~Agirre, I.~Heintz, and D.~Roth.
\newblock Recent advances in natural language processing via large pre-trained language models: A survey.
\newblock \emph{ACM Computing Surveys}, 56\penalty0 (2):\penalty0 1--40, 2023.

\bibitem[Ni and Buehler(2024)]{ni2024mechagents}
B.~Ni and M.~J. Buehler.
\newblock Mechagents: Large language model multi-agent collaborations can solve mechanics problems, generate new data, and integrate knowledge.
\newblock \emph{Extreme Mechanics Letters}, 67:\penalty0 102131, 2024.

\bibitem[{OpenAI}(2023)]{OpenAI2023}
{OpenAI}.
\newblock Gpt-4 technical report, 2023.
\newblock OpenAI Blog.

\bibitem[Park et~al.(2023)Park, O'Brien, Cai, Morris, Liang, and Bernstein]{park2023generative}
J.~S. Park, J.~O'Brien, C.~J. Cai, M.~R. Morris, P.~Liang, and M.~S. Bernstein.
\newblock Generative agents: Interactive simulacra of human behavior.
\newblock In \emph{Proceedings of the 36th annual acm symposium on user interface software and technology}, pages 1--22, 2023.

\bibitem[Rastogi et~al.(2021)Rastogi, Dutta, Christian, Gridley, Zaki, Gittens, and Aggarwal]{rastogi2021predicting}
N.~Rastogi, S.~Dutta, R.~Christian, J.~Gridley, M.~Zaki, A.~Gittens, and C.~Aggarwal.
\newblock Predicting malware threat intelligence using kgs.
\newblock \emph{arXiv preprint arXiv:2102.05571}, 2021.

\bibitem[Ren et~al.(2022)Ren, Xiao, Zhou, Zhang, and Tian]{ren2022cskg4apt}
Y.~Ren, Y.~Xiao, Y.~Zhou, Z.~Zhang, and Z.~Tian.
\newblock Cskg4apt: A cybersecurity knowledge graph for advanced persistent threat organization attribution.
\newblock \emph{IEEE Transactions on Knowledge and Data Engineering}, 35\penalty0 (6):\penalty0 5695--5709, 2022.

\bibitem[Shapley(1971)]{Shapley1971}
L.~S. Shapley.
\newblock Cores of convex games.
\newblock \emph{International Journal of Game Theory}, 1:\penalty0 11--26, 1971.

\bibitem[Sikos(2023)]{sikos2023cybersecurity}
L.~F. Sikos.
\newblock Cybersecurity knowledge graphs.
\newblock \emph{Knowledge and Information Systems}, 65\penalty0 (9):\penalty0 3511--3531, 2023.

\bibitem[Steck et~al.(2024)Steck, Ekanadham, and Kallus]{steck2024cosine}
H.~Steck, C.~Ekanadham, and N.~Kallus.
\newblock Is cosine-similarity of embeddings really about similarity?
\newblock In \emph{Companion Proceedings of the ACM Web Conference 2024}, pages 887--890, 2024.

\bibitem[Sui et~al.(2023)Sui, Zhang, Sun, Xu, Zhang, Li, Sun, Guo, Shen, Zhang, et~al.]{sui2023logkg}
Y.~Sui, Y.~Zhang, J.~Sun, T.~Xu, S.~Zhang, Z.~Li, Y.~Sun, F.~Guo, J.~Shen, Y.~Zhang, et~al.
\newblock Logkg: Log failure diagnosis through knowledge graph.
\newblock \emph{IEEE Transactions on Services Computing}, 16\penalty0 (5):\penalty0 3493--3507, 2023.

\bibitem[Talebirad and Nadiri(2023)]{talebirad2023multi}
Y.~Talebirad and A.~Nadiri.
\newblock Multi-agent collaboration: Harnessing the power of intelligent llm agents.
\newblock \emph{arXiv preprint arXiv:2306.03314}, 2023.

\bibitem[Wu et~al.(2023)Wu, Zeng, Yang, Chen, Peng, and Liu]{wu2023task}
C.~Wu, Z.~Zeng, Y.~Yang, M.~Chen, X.~Peng, and S.~Liu.
\newblock Task-driven cleaning and pruning of noisy knowledge graph.
\newblock \emph{Information Sciences}, 646:\penalty0 119406, 2023.

\bibitem[Xue et~al.(2021)Xue, Lu, and Zhang]{Xue2021}
X.~Xue, J.~Lu, and J.~Zhang.
\newblock Item-weighted likelihood method for measuring growth in longitudinal study with tests composed of both dichotomous and polytomous items.
\newblock \emph{Frontiers in Psychology}, 12:\penalty0 580015, 2021.
\newblock \doi{10.3389/fpsyg.2021.580015}.

\bibitem[Zhang et~al.(2020)Zhang, Ray, Priestley, and Tan]{zhang2020descriptive}
L.~Zhang, H.~Ray, J.~Priestley, and S.~Tan.
\newblock A descriptive study of variable discretization and cost-sensitive logistic regression on imbalanced credit data.
\newblock \emph{Journal of Applied Statistics}, 47\penalty0 (3):\penalty0 568--581, 2020.

\bibitem[Zhang et~al.(2021)Zhang, Priestley, DeMaio, Ni, and Tian]{zhang2021measuring}
L.~Zhang, J.~Priestley, J.~DeMaio, S.~Ni, and X.~Tian.
\newblock Measuring customer similarity and identifying cross-selling products by community detection.
\newblock \emph{Big data}, 9\penalty0 (2):\penalty0 132--143, 2021.

\bibitem[Zhang et~al.(2022)Zhang, Geisler, Ray, and Xie]{zhang2022improving}
L.~Zhang, T.~Geisler, H.~Ray, and Y.~Xie.
\newblock Improving logistic regression on the imbalanced data by a novel penalized log-likelihood function.
\newblock \emph{Journal of Applied Statistics}, 49\penalty0 (13):\penalty0 3257--3277, 2022.

\bibitem[Zhao et~al.(2021)Zhao, Liu, Sullivan, Chang, Spina, Blasch, and Chen]{zhao2021anomaly}
Q.~Zhao, J.~Liu, N.~Sullivan, K.~Chang, J.~Spina, E.~Blasch, and G.~Chen.
\newblock Anomaly detection of unstructured big data via semantic analysis and dynamic knowledge graph construction.
\newblock In \emph{Signal processing, sensor/information fusion, and target recognition XXX}, volume 11756, pages 126--142. SPIE, 2021.

\bibitem[Zhao et~al.(2023)Zhao, Zhou, Li, Tang, Wang, Hou, Min, Zhang, Zhang, Dong, et~al.]{zhao2023survey}
W.~X. Zhao, K.~Zhou, J.~Li, T.~Tang, X.~Wang, Y.~Hou, Y.~Min, B.~Zhang, J.~Zhang, Z.~Dong, et~al.
\newblock A survey of large language models.
\newblock \emph{arXiv preprint arXiv:2303.18223}, 1\penalty0 (2), 2023.

\bibitem[Zhou et~al.(2020)Zhou, Shen, Liu, Zhang, Guo, and Zhang]{zhou2020survey}
H.~Zhou, T.~Shen, X.~Liu, Y.~Zhang, P.~Guo, and J.~Zhang.
\newblock Survey of knowledge graph approaches and applications.
\newblock \emph{Journal on Artificial Intelligence}, 2\penalty0 (2):\penalty0 89--101, 2020.

\bibitem[Zhu(2024)]{zhu2024foundations}
Q.~Zhu.
\newblock Foundations of cyber resilience: The confluence of game, control, and learning theories.
\newblock \emph{arXiv preprint arXiv:2404.01205}, 2024.

\bibitem[Zhu et~al.(2012)Zhu, Fung, Boutaba, and Basar]{zhu2012guidex}
Q.~Zhu, C.~Fung, R.~Boutaba, and T.~Basar.
\newblock Guidex: A game-theoretic incentive-based mechanism for intrusion detection networks.
\newblock \emph{IEEE Journal on Selected Areas in Communications}, 30\penalty0 (11):\penalty0 2220--2230, 2012.

\end{thebibliography}
\end{document}